\newtheorem{theorem}{Theorem}
\newenvironment{proof}{{\noindent\it Proof:\quad}}{\hfill $\square$\par}
\title{From Alignment to Assignment: \\Frustratingly Simple Unsupervised Entity Alignment}
\author{Xin Mao$^{1}$, Wenting Wang$^{2}$, Yuanbin Wu$^{1}$, Man Lan$^{1^*}$\\
  $^{1}$ School of Computer Science and Technology, East China Normal University\\
  Lazada, Alibaba Group \\
  \texttt{xmao@stu.ecnu.edu.cn,wenting.wang@lazada.com}\\\texttt{\{ybwu,mlan\}@cs.ecnu.edu.cn}}
\begin{document}
\maketitle
\begin{abstract}
Cross-lingual entity alignment (EA) aims to find the equivalent entities between cross-lingual KGs (Knowledge Graphs), which is a crucial step for integrating KGs.
Recently, many GNN-based EA methods are proposed and show decent performance improvements on several public datasets.
However, existing GNN-based EA methods inevitably inherit poor interpretability and low efficiency from neural networks.
Motivated by the isomorphic assumption of GNN-based methods, we successfully transform the cross-lingual EA problem into an assignment problem.
Based on this re-definition, we propose a frustratingly Simple but Effective Unsupervised entity alignment method (SEU) without neural networks.
Extensive experiments have been conducted to show that our proposed unsupervised approach even beats advanced supervised methods across all public datasets while having high efficiency, interpretability, and stability.
\end{abstract}

\section{Introduction}
\label{sec:intro}
The knowledge graph (KG) represents a collection of interlinked descriptions of real-world objects and events, or abstract concepts (e.g., documents), which has facilitated many downstream applications, such as recommendation systems \cite{DBLP:conf/www/0003W0HC19,DBLP:conf/www/WangZZLXG19} and question-answering \cite{DBLP:conf/www/ZhaoXQB20,DBLP:conf/wsdm/QiuWJZ20}.
Over recent years, a large number of KGs are constructed from different domains and languages by different organizations.
These cross-lingual KGs usually hold unique information individually but also share some overlappings.
Integrating these cross-lingual KGs could provide a broader view for users, especially for the minority language users who usually suffer from lacking language resources.
Therefore, how to fuse the knowledge from cross-lingual KGs has attracted increasing attentions.

\begin{figure}
  \centering
  \includegraphics[width=1\linewidth]{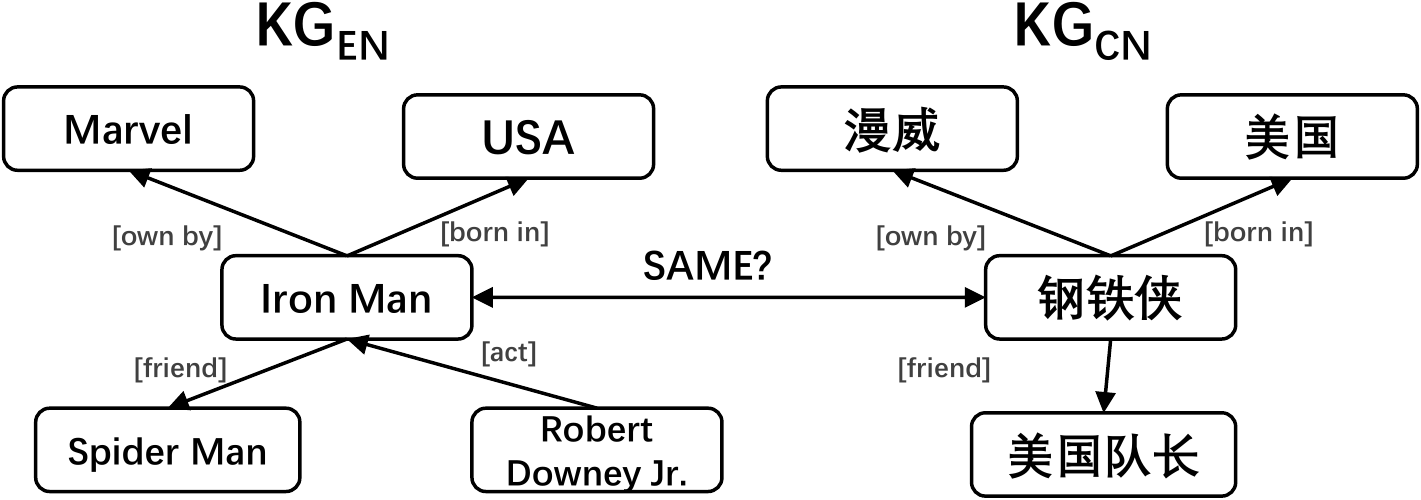}
  \caption{An example of cross-lingual knowledge graph entity alignment.}\label{fig:intro}
\end{figure}

As shown in Figure \ref{fig:intro}, cross-lingual entity alignment (EA) aims to find the equivalent entities across multi-lingual KGs, which is a crucial step for integrating KGs.
Conventional methods \cite{DBLP:journals/pvldb/SuchanekAS11,DBLP:conf/semweb/Jimenez-RuizG11} usually solely rely on lexical matching and probability reasoning, which requires machine translation systems to solve cross-lingual tasks.
However, existing machine translation systems are not able to achieve high accuracy with limited contextual information, especially for language pairs that are not alike, such as Chinese-English and Japanese-English.

Recently, Graph Convolutional Network (GCN) \cite{DBLP:conf/iclr/KipfW17} and subsequent Graph Neural Network (GNN) variants have achieved state-of-the-art results in various graph application.
Intuitively, GNN is better in capturing structural information of KGs to compensate for the shortcoming of conventional methods.
Specifically, several GNN-based EA methods \cite{DBLP:conf/acl/XuWYFSWY19,DBLP:conf/ijcai/WuLF0Y019,DBLP:conf/emnlp/WangYY20} indeed demonstrate decent performance improvements on public datasets.
All these GNN-based EA methods are built upon a core premise, i.e., entities and their counterparts have similar neighborhood structures.
However, better performance is not the only outcome of using GNN.
Existing GNN-based methods inevitably inherit the following inborn defects from neural networks:

(1) \textbf{Poor Interpretability}:
Recently, many researchers view GNN \cite{DBLP:conf/acl/XuWYFSWY19,DBLP:conf/ijcai/WuLF0Y019} as a black box, focusing on improving performance metrics.
The tight coupling between nonlinear operations and massive parameters makes GNN hard to be interpreted thoroughly.
As a result, it is hard to judge whether the new designs are universal or just over-fitting on a specific dataset.
A recent summary \cite{DBLP:conf/coling/ZhangLCCLXZ20} notes that several "advanced" EA methods are even beaten by the conventional methods on several public datasets.

(2) \textbf{Low Efficiency}:
To further increase the performance, newly proposed EA methods try to stack novel techniques, e.g., Graph Attention Networks \cite{DBLP:conf/ijcai/WuLF0Y019}, Graph Matching Networks \cite{DBLP:conf/acl/XuWYFSWY19}, and Joint Learning \cite{DBLP:conf/acl/CaoLLLLC19}.
Consequently, the overall architectures become more and more unnecessarily complex, resulting in their time-space complexities also dramatically increase.
\citet{9174835} present that the running time of complex methods (e.g., RDGCN \cite{DBLP:conf/ijcai/WuLF0Y019}) is $10\times$ more than that of vanilla GCN \cite{DBLP:conf/emnlp/WangLLZ18}.

In this paper, we notice that existing GNN-based EA methods inherit considerable complexity from their neural network lineage.
Naturally, we consider eliminating the redundant designs from existing EA methods to enhance interpretability and efficiency without losing accuracy.
Leveraging the core premise of GNN-based EA methods, we re-state the assumption that both structures and textual features of source and target KGs are isomorphic.
With this assumption, we are able to successfully transform the cross-lingual EA problem into an assignment problem, which is a fundamental and well-studied combinatorial optimization problem.
Afterward, the assignment problem could be easily solved by the Hungarian algorithm \cite{kuhn1955hungarian} or Sinkhorn operation \cite{DBLP:conf/nips/Cuturi13}.

Based on the above findings, we propose a frustratingly Simple but Effective Unsupervised EA method (SEU) without neural networks.
Compared to existing GNN-based EA methods, SEU only retains the basic graph convolution operation for feature propagation while abandoning the complex neural networks, significantly improving efficiency and interpretability.
Experimental results on the public datasets show that SEU could be completed in several seconds with the GPU or tens of seconds with the CPU.
More startlingly, our unsupervised method even outperforms the state-of-the-art supervised approaches across all public datasets.
Furthermore, we discuss the possible reasons behind the unsatisfactory performance of existing complex EA methods and the necessity of neural networks in cross-lingual EA.
The main contributions are summarized as follows:

\begin{itemize}
  \item By assuming that both structures and textual features of source and target KGs are isomorphic, we successfully transform the cross-lingual EA problem into an assignment problem.
Based on this finding, we propose a frustratingly Simple but Effective Unsupervised entity alignment method (SEU).
  \item Extensive experiments on public datasets indicate that our unsupervised method outperforms all advanced supervised competitors while preserving high efficiency, interpretability, and stability.
\end{itemize}

\section{Task Definition}
\label {sec:TF}
KG stores the real-world knowledge in the form of triples $(h,r,t)$.
A KG could be defined as $G=(E,R,T)$, where $E$, $R$, and $T$ represent the entity set, relation set, and triple set, respectively.
Given a source graph $G_s = (E_s,R_s,T_s)$ and a target graph $G_t = (E_t,R_t,T_t)$, EA aims to find the entity correspondences $\bm P$ between KGs.

\begin{figure}
  \centering
  \includegraphics[width=1\linewidth]{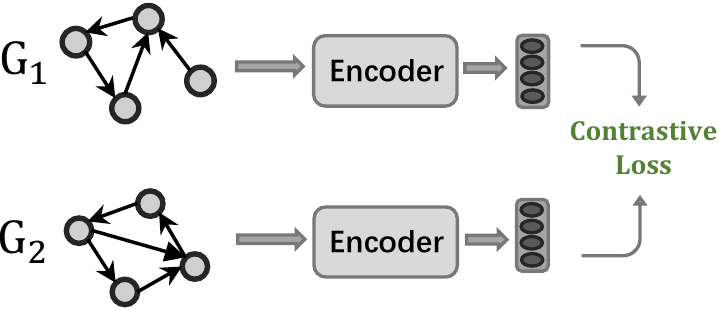}
  \caption{The architecture of existing EA methods.}\label{fig:rw}
\end{figure}

\section{Related Work}
\subsection{Cross-lingual Entity Alignment}
Existing cross-lingual EA methods are based on the premise that equivalent entities in different KGs have similar neighboring structures.
Following this idea, most of them can be summarized into two steps (as shown in Figure \ref{fig:rw}):
(1) Using KG embedding methods (e.g., TransE \cite{DBLP:conf/nips/BordesUGWY13} and GCN \cite{DBLP:journals/corr/KipfW16}) to generate low-dimensional embeddings for entities and relations in each KGs.
(2) Mapping these embeddings into a unified vector space through contrastive losses \cite{DBLP:conf/cvpr/HadsellCL06,DBLP:conf/cvpr/SchroffKP15} and pre-aligned entity pairs.

Based on the vanilla GCN, many EA methods design task-specific modules for improving the performance of EA.
\citet{DBLP:conf/acl/CaoLLLLC19} propose a multi-channel GCN to learn multi-aspect information from KGs.
\citet{DBLP:conf/ijcai/WuLF0Y019} use a relation-aware dual-graph network to incorporate relation information with structural information.
Moreover, due to the lack of labeled data, some methods \cite{DBLP:conf/ijcai/SunHZQ18,DBLP:conf/wsdm/MaoWXLW20} apply iterative strategies to generate semi-supervised data.
In order to provide a multi-aspect view from both structure and semantic, some methods \cite{DBLP:conf/emnlp/WuLFWZ19,DBLP:conf/emnlp/YangZSLLS19} use word vectors of translated entity names as the input features of GNNs.

\begin{figure}
  \centering
  \includegraphics[width=0.9\linewidth]{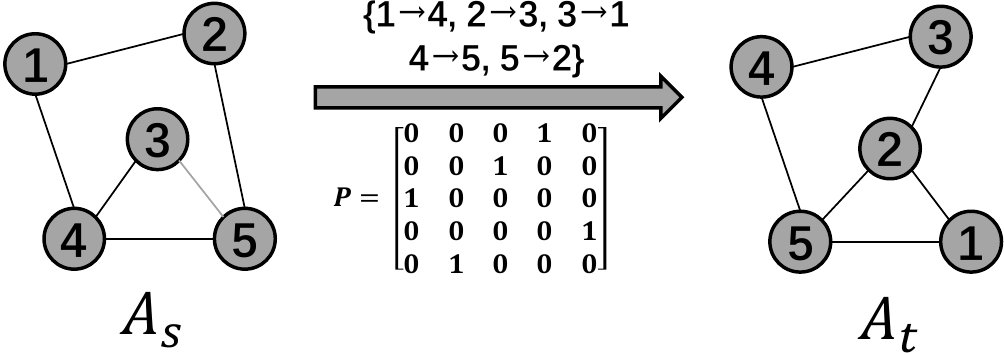}
  \caption{An example of isomorphic graph.}\label{fig:E-A}
\end{figure}

\subsection{Assignment Problem}
\label{rw:ap}
The assignment problem is a fundamental and well-studied combinatorial optimization problem.
An intuitive instance is to assign $N$ jobs for $N$ workers.
Assuming that each worker can do each job at a term, though with varying degrees of efficiency, let $x_{ij}$ be the profit if the $i$-th worker is assigned to the $j$-th job.
Then the problem is to find the best assignment plan (which job should be assigned to which person in one-to-one basis) so that the total profit of performing all jobs is maximum.
Formally, it is equivalent to maximizing the following equation:
\begin{equation}
  \underset{\bm P\in\mathbb{P}_N}{arg\;max}{\;\left\langle \bm P,\bm X\right\rangle}_F
\end{equation}
$\bm X \in \mathbb{R}^{N\times N}$ is the profit matrix.
$\bm P$ is a permutation matrix denoting the assignment plan.
There are exactly one entry of $1$ in each row and each column in $\bm P$ while $0$s elsewhere.
$\mathbb{P}_N$ represents the set of all N-dimensional permutation matrices.
Here, $\langle\cdot\rangle_F$ represents the Frobenius inner product.
In this paper, we adopt the Hungarian algorithm \cite{kuhn1955hungarian} and the Sinkhorn operation \cite{DBLP:conf/nips/Cuturi13} to solve the assignment problem.

\section{The Proposed Method}

\subsection{From Alignment to Assignment}
\label{proof}
The inputs of our proposed SEU are four matrices:
$\bm A_s \in \mathbb{R}^{|E_s|\times|E_s|}$ and $\bm A_t \in \mathbb{R}^{|E_t|\times|E_t|}$ represent the adjacent matrices of the source graph $G_s$ and the target graph $G_t$.
$\bm H_s \in \mathbb{R}^{|E_s|\times d}$ and $\bm H_t\in \mathbb{R}^{|E_t|\times d}$ represent the textual features of entities that have been pre-mapped into a unified semantic space through machine translation systems or cross-lingual word embeddings.

Similar to the assignment plan, aligned entity pairs in EA also needs to satisfy the one-to-one constraint.
Let a permutation matrix $\bm P \in \mathbb{P}_{|E|}$ represent the entity correspondences between $G_s$ and $G_t$.
$\bm P_{ij}=1$ indicates that $e_i\in G_s$ and $e_j\in G_t$ are an equivalent entity pair.
The goal of SEU is to solve $\bm P$ according to $\{\bm A_s, \bm A_t,\bm H_s,\bm H_t\}$.
Consider the following ideal situation:

(1) $\bm A_s$ and $\bm A_t$ are isomorphic, i.e., $\bm A_s$ could be transformed into $\bm A_t$ by reordering the entity node indices according to $\bm P$ (as shown in Figure \ref{fig:E-A}):
\begin{align}
\bm P\bm A_s\bm P^{-1}&=\bm A_t
\end{align}

(2) The textual features of equivalent entity pairs are mapped perfectly by the translation system. Therefore, $\bm H_s$ and $\bm H_t$ could also be aligned according to the entity correspondences $\bm P$:
\begin{align}
\bm P\bm H_s&=\bm H_t
\end{align}

By combining Equation (2) and (3), the connection between the $5$-tuple $\{\bm A_s,\bm A_t,\bm H_s,\bm H_s,\bm P\}$ could be described as follows:
\begin{align}
\begin{split}
    {(\bm P\bm A_s\bm P^{-1})}^l\bm P\bm H_s&=\bm A_t^l\bm H_t \;\;\;\;\forall l\in\mathbb{N} \\
    \Rightarrow \;\;\;\;\;\;\;\;\;\;\;\;\;\;\;\;\;\;{\bm P\bm A_s^l}\bm H_s&=\bm A_t^l\bm H_t \;\;\;\;\forall l\in\mathbb{N}
\end{split}
\end{align}
Based on Equation (4), $\bm P$ could be solved by minimizing the Frobenius norm $\|{\bm P\bm A_s^l}\bm H_s-\bm A_t^l\bm H_t\|^2_F$ under the one-to-one constraint $\bm P \in \mathbb{P}_{|E|}$.
Theoretically, for arbitrarily depth $l \in \mathbb{N}$, the solution of $\bm P$ should be the same.
However, the above inference is based on the ideal isomorphic situation.
In practice, $G_s$ and $G_t$ are not strictly isomorphic and the translation system cannot perfectly map the textual features into a unified semantic space either.
In order to reduce the impact of noise existing in practice, $\bm P$ should be fit for various depths $l$.
Therefore, we propose the following equation to solve the cross-lingual EA problem:
\begin{equation}
    \underset{\bm P\in{\mathbb{P}}_{\vert E\vert}}{\;arg\;min}\sum_{l=0}^L\;\left\|\bm P\bm A_s^l\bm H_s-\bm A_t^l\bm H_t\right\|_F^2
\end{equation}

\begin{theorem}
Equation (5) is equivalent to solving the following assignment problem:
\begin{small}
\begin{equation}
\underset{\bm P\in{\mathbb{P}}_{\vert E\vert}}{\;arg\;max}{\;\left\langle \bm P,\sum_{l=0}^L\bm A_t^l\bm H_t\left(\bm A_s^l\bm H_s\right)^T\right\rangle}_F
\end{equation}
\end{small}
\end{theorem}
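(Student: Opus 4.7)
The plan is to reduce the sum-of-squared-Frobenius-norms objective in Equation (5) to a linear objective in $\bm P$ by expanding each squared norm and discarding the terms that do not depend on $\bm P$. For brevity, I will write $\bm X_l := \bm A_s^l \bm H_s$ and $\bm Y_l := \bm A_t^l \bm H_t$, so that Equation (5) becomes $\arg\min_{\bm P} \sum_{l=0}^L \|\bm P \bm X_l - \bm Y_l\|_F^2$.

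First, I would expand each summand using the polarization-style identity $\|\bm U - \bm V\|_F^2 = \|\bm U\|_F^2 - 2\langle \bm U, \bm V\rangle_F + \|\bm V\|_F^2$, giving three groups of terms. The $\|\bm Y_l\|_F^2$ terms are clearly constants in $\bm P$ and can be dropped. The key observation for the $\|\bm P \bm X_l\|_F^2$ terms is that any permutation matrix $\bm P \in \mathbb{P}_{|E|}$ is orthogonal ($\bm P^T \bm P = \bm I$), so $\|\bm P \bm X_l\|_F^2 = \operatorname{tr}(\bm X_l^T \bm P^T \bm P \bm X_l) = \|\bm X_l\|_F^2$, which is also constant in $\bm P$. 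Hence the minimization reduces to
\begin{equation*}
\arg\max_{\bm P \in \mathbb{P}_{|E|}} \sum_{l=0}^L \langle \bm P \bm X_l, \bm Y_l \rangle_F.
\end{equation*}

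Next, I would apply the cyclic/transpose property of the Frobenius inner product, $\langle \bm A \bm B, \bm C\rangle_F = \langle \bm A, \bm C \bm B^T\rangle_F$, to each summand with $\bm A = \bm P$, $\bm B = \bm X_l$, $\bm C = \bm Y_l$, yielding $\langle \bm P \bm X_l, \bm Y_l\rangle_F = \langle \bm P, \bm Y_l \bm X_l^T\rangle_F$. Linearity of the inner product in its second argument then lets me pull the sum inside, producing exactly $\langle \bm P, \sum_{l=0}^L \bm A_t^l \bm H_t (\bm A_s^l \bm H_s)^T\rangle_F$, which is the objective of Equation (6). Since only constants (independent of $\bm P$) were discarded along the way, the argmin in Equation (5) and the argmax in Equation (6) coincide.

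The only subtlety in this proof is verifying that the permutation-matrix structure is actually used in the right place: orthogonality of $\bm P$ is what makes $\|\bm P \bm X_l\|_F^2$ a constant, so the reduction would fail for arbitrary doubly stochastic relaxations unless one argues separately. This is the step I would flag explicitly, since it is also what justifies solving the resulting assignment problem by the Hungarian algorithm (exact) or, when one relaxes to doubly stochastic matrices, by the Sinkhorn operation mentioned in Section~\ref{rw:ap}.
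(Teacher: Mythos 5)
Your proof is correct and follows essentially the same route as the paper's: expand each squared Frobenius norm, use orthogonality of the permutation matrix to discard the $\|\bm P\bm X_l\|_F^2$ and $\|\bm Y_l\|_F^2$ terms as constants, then apply the transpose/trace identity and linearity to obtain the single inner product with $\sum_{l=0}^L \bm A_t^l\bm H_t(\bm A_s^l\bm H_s)^T$. Your closing remark correctly identifies orthogonality of $\bm P$ as the step where the permutation structure is genuinely used, which the paper also notes, albeit more tersely.
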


\begin{proof}
According to the property of Frobenius norm $\|\bm A-\bm B\|^2_F = \|\bm A\|^2_F + \|\bm B\|^2_F - 2\langle \bm A,\bm B \rangle_F$, Equation (5) could be derived into following:
\begin{small}
\begin{align}
\begin{split}
&\underset{\bm P\in{\mathbb{P}}_{\vert E\vert}}{\;arg\;min}\sum_{l=0}^L\;\left\|\bm P\bm A_s^l\bm H_s-\bm A_t^l\bm H_t\right\|_F^2\\
\;=&\underset{\bm P\in{\mathbb{P}}_{\vert E\vert}}{\;arg\;min}\sum_{l=0}^L\;\left\|\bm P\bm A_s^l\bm H_s\right\|_F^2 + \left\|\bm A_t^l\bm H_t\right\|_F^2\\
\;&-2{\left\langle \bm P \bm A_s^l\bm H_s,\;\bm A_t^l\bm H_t\right\rangle}_F
\end{split}
\end{align}
\end{small}
Here, the permutation matrix $\bm P$ must be orthogonal, so both $\left\|\bm P\bm A_s^l\bm H_s\right\|_F^2$ and $\left\|\bm A_t^l\bm H_t\right\|_F^2$ are constants.
Then, Equation (7) is equivalent to maximizing as below:
\begin{small}
\begin{align}
&\underset{\bm P\in{\mathbb{P}}_{\vert E\vert}}{\;arg\;max}\sum_{l=0}^L \;{\left\langle \bm P\bm A_s^l\bm H_s,\bm A_t^l\bm H_t\right\rangle}_F
\end{align}
\end{small}
For arbitrarily real matrices $\bm A$ and $\bm B$, these two equations always hold: $\langle \bm A,\bm B \rangle_F = {\rm Tr}(\bm A\bm B^T)$ and $\langle \bm A,\bm B+\bm C\rangle_F = \langle \bm A,\bm B\rangle_F + \langle \bm A,\bm C\rangle_F$, where ${\rm Tr}(\bm X)$ represents the trace of matrix $\bm X$.
Therefore, Theorem 1 is proved:

\begin{small}
\begin{align}
\begin{split}
&\underset{\bm P\in{\mathbb{P}}_{\vert E\vert}}{\;arg\;max}\sum_{l=0}^L \;{\left\langle \bm P\bm A_s^l\bm H_s,\bm A_t^l\bm H_t\right\rangle}_F\\
=&\underset{\bm P\in{\mathbb{P}}_{\vert E\vert}}{\;arg\;max}\sum_{l=0}^L \;{\rm Tr}\left( \bm P\bm A_s^l\bm H_s  (\bm A_t^l\bm H_t)^T\right)\\
=&\underset{\bm P\in{\mathbb{P}}_{\vert E\vert}}{\;arg\;max}\sum_{l=0}^L \;{\left\langle \bm P, \bm A_t^l\bm H_t(\bm A_s^l\bm H_s)^T\right\rangle}_F\\
=&\underset{\bm P\in{\mathbb{P}}_{\vert E\vert}}{\;arg\;max}{\;\left\langle \bm P,\sum_{l=0}^L\bm A_t^l\bm H_t\left(\bm A_s^l\bm H_s\right)^T\right\rangle}_F
\end{split}
\end{align}
\end{small}
\end{proof}

By Theorem 1, we successfully transform the EA problem into the assignment problem.
Compared to GNN-based EA methods, our proposed method retains the basic graph convolution operation for feature propagation but replaces the complex neural networks with the well-studied assignment problem.
Note that the entity scales $|E_s|$ and $|E_t|$ are usually inconsistent in practice, resulting in the profit matrix not being a square matrix.
This kind of unbalanced assignment problem could be reduced to the balanced assignment problem easily.
Assuming that $|E_s|$$>$$|E_t|$, a naive reduction is to pad the profit matrix with zeros such that its shape becomes $\mathbb{R}^{|E_s|\times |E_s|}$.
This naive reduction is suitable for the dataset with a small gap between $|E_s|$ and $|E_t|$.
For the dataset with a large entity scale gap, there is a more efficient reduction algorithm available \cite{ramshaw2012minimum}.

\subsection{Two Algorithms for Solving the Assignment Problem}
\label{sec:SAP}
The first polynomial time-complexity algorithm for the assignment problem is the Hungarian algorithm \cite{kuhn1955hungarian}, which is based on improving a matching along the augmenting paths.
The time complexity of the original Hungarian algorithm is $O(n^4)$.
Later, \citet{DBLP:journals/computing/JonkerV87} improve the algorithm to achieve $O(n^3)$ running time, which is one of the most popular variants.

Besides the Hungarian algorithm, the assignment problem could also be regarded as a special case of the optimal transport problem.
In the optimal transport problem, the assignment plan $\bm P$ could be any doubly stochastic matrix instead of a permutation matrix.
Based on the Sinkhorn operation \cite{sinkhorn1964relationship,DBLP:journals/corr/abs-1106-1925}, \citet{DBLP:conf/nips/Cuturi13} proposes a fast and completely parallelizable algorithm for the optimal transport problem:
\begin{align}
\begin{split}
    S^0(\bm X) &= exp(\bm X),\\
    S^k(\bm X) &= {\mathcal N}_c({\mathcal N}_r(S^{k-1}(\bm X))),\\
    {\rm Sinkhorn}(\bm X) &= \lim_{k\rightarrow\infty}S^k(\bm X).
\end{split}
\end{align}
where ${\mathcal N}_r(\bm X)$$=$$ \bm X \varoslash (\bm X \bm 1_{N}\bm 1_{N}^T)$ and ${\mathcal N}_c $$=$$ \bm X \varoslash (\bm 1_{N}\bm 1_{N}^T\bm X)$ are the row and column-wise normalization operators of a matrix, $\varoslash$ represents the element-wise division, and $\bm 1_{N}$ is a column vector of ones.
Then, \citet{DBLP:conf/iclr/MenaBLS18} further prove that the assignment problem could also be solved by the Sinkhorn operation as a special case of the optimal transport problem:
\begin{align}
\begin{split}
  &\underset{\bm P\in\mathbb{P}_N}{arg\;max}{\;\left\langle \bm P,\bm X\right\rangle}_F\\
  =& \lim_{\tau\rightarrow 0^+}{\rm Sinkhorn}(\bm X/\tau)
\end{split}
\end{align}

In general, the time complexity of the Sinkhorn operation is $O(kn^2)$.
Because the number of iteration $k$ is limited, the Sinkhorn operation can only obtain an approximate solution in practice.
But according to our experimental results, a very small $k$ is enough to achieve decent performance in entity alignment.
Therefore, compared to the Hungarian algorithm, the Sinkhorn operation is $n$ times more efficient, i.e., $O(n^2)$.

\subsection{Implementation Details}
The above two sections introduce how to transform the cross-lingual EA problem into the assignment problem and how to solve the assignment problem.
This section will clarify two important implementation details of our proposed method SEU.

\subsubsection{Textual Features $\bm H$}
\label{TF}
The input features of SEU include two aspects:

\textbf{Word-Level}.
In previous cross-lingual EA methods \cite{DBLP:conf/acl/XuWYFSWY19,DBLP:conf/ijcai/WuLF0Y019}, the most commonly used textual features are word-level entity name vectors.
Specifically, these methods first use machine translation systems or cross-lingual word embeddings to map entity names into a unified semantic space and then average the pre-trained entity name vectors to construct the initial features.
To make fair comparisons, we adopt the same entity name translations and word vectors provided by \citet{DBLP:conf/acl/XuWYFSWY19}.

\textbf{Char-Level}.
Because of the contradiction between the extensive existence of proper nouns (e.g., person and city name) and the limited size of word vocabulary, the word-level EA methods suffer from a serious out of vocabulary (OOV) issue.
Therefore, many EA methods explore the char-level features, using char-CNN \cite{DBLP:conf/emnlp/WangYY20} or name-BERT \cite{DBLP:conf/emnlp/LiuCPLC20} to extract the char/sub-word features of entities.
In order to keep the simplicity and consistency of our proposed method, we adopt the character bigrams of translated entity names as the char-level input textual features instead of complex neural networks.

In addition to these text-based methods, we notice that some structure-based EA methods \cite{DBLP:conf/emnlp/WangLLZ18, DBLP:conf/icml/GuoSH19} do not require any textual information at all, where the entity features are randomly initialized.
Section \ref{sec:dis} will discuss the connection between text-based and structure-based methods and challenge the necessity of neural networks in cross-lingual EA.

\subsubsection{Adjacent Matrix $\bm A$}
\label{sec:adj}
In Section \ref{proof}, all deductions are built upon the assertion that the adjacency matrices $\bm A_s$ and $\bm A_t$ are isomorphic.
Obviously, let $\bm D$ be the degree matrix of adjacency matrix $\bm A_{s/t}$, the equal probability random walk matrix $\bm A_r = \bm D^{-1}\bm A_{s/t}$ and the symmetric normalized Laplacian matrix $\bm A_L =\bm I - \bm D^{-1/2}\bm A_{s/t}\bm D^{-1/2}$ of $A_s$ and $A_t$ are also isomorphic too.
Therefore, if $\bm A_{s/t}$ is replaced by $\bm  A_r$ or $\bm A_L$, our method still holds.

However, the above matrices ignore the relation types in the KGs and treat all types of relations equally important.
We believe the relations with less frequency should have higher weight because they represent more unique information.
Following this intuition, we apply a simple strategy to generate the relational adjacency matrix $\bm A_{rel}$, for $a_{ij} \in \bm A_{rel}$:
\begin{small}
\begin{equation}
    \bm a_{ij} = \frac{\sum_{r_j\in R_{i,j}}\ln(\vert T\vert/\vert T_{r_j}\vert)}{\sum_{k\in{\mathcal N}_i}\sum_{r_k\in R_{i,k}}\ln(\vert T\vert/\vert T_{r_k}\vert)}
\end{equation}
\end{small}
where ${\mathcal N}_i$ represents the neighboring set of entity $e_i$, $R_{i,j}$ is the relation set between $e_i$ and $e_j$, $|T|$ and $|T_r|$ represent the total number of all triples and the triples containing relation $r$, respectively.

\section{Experiments}
Our experiments are conducted on a workstation with a GeForce GTX Titan X GPU and a Ryzen ThreadRipper 3970X CPU.
The code and datasets are available in \url{github.com/MaoXinn/SEU}.

\subsection{Datasets}
To make fair comparisons with previous EA methods, we experiment with two widely used public datasets:
(1) \textbf{DBP15K} \cite{DBLP:conf/semweb/SunHL17}:
This dataset consists of three cross-lingual subsets from multi-lingual DBpedia: $\rm DBP_{FR-EN}$, $\rm DBP_{ZH-EN}$, $\rm DBP_{JA-EN}$.
Each subset contains $15,000$ entity pairs.
(2) \textbf{SRPRS}:
\citet{DBLP:conf/icml/GuoSH19} propose this sparse dataset, including two cross-lingual subsets: $\rm SRPRS_{FR-EN}$ and $\rm SRPRS_{DE-EN}$.
Each subset also contains $15,000$ entity pairs but with much fewer triples compared to DBP$15$K.

\begin{table}[t]
\begin{center}
\resizebox{0.9\linewidth}{!}{
\renewcommand\arraystretch{0.95}
\begin{tabular}{p{2cm}c|cccccc}
\toprule
\multicolumn{2}{c|}{Datasets} & $|E|$ & $|R|$  & $|T|$\\
\toprule
\multirow{2}{1.3cm}{$\rm{DBP_{ZH-EN}}$} & Chinese & 19,388 & 1,701& 70,414\\
& English & 19,572 & 1,323 & 95,142 \\
\multirow{2}{1.3cm}{$\rm{DBP_{JA-EN}}$} & Japanese & 19,814 & 1,299 & 77,214\\
& English & 19,780 & 1,153  & 93,484 \\
\multirow{2}{1.3cm}{$\rm{DBP_{FR-EN}}$} & French & 19,661 & 903 & 105,998\\
& English & 19,993 & 1,208 & 115,722  \\
\hline
\multirow{2}{1.3cm}{$\rm{SRPRS_{FR-EN}}$} & French & 15,000 & 177& 33,532\\
& English & 15,000 & 221& 36,508 \\
\multirow{2}{1.3cm}{$\rm{SRPRS_{DE-EN}}$} & German & 15,000 & 120 & 37,377\\
& English & 15,000 & 222 & 38,363  \\
\bottomrule
\end{tabular}
}
\end{center}
\caption{Statistical data of DBP15K and SRPRS.}\label{table:data}
\end{table}

The statistics of these datasets are summarized in Table \ref{table:data}.
Most of the previous studies\cite{DBLP:conf/emnlp/WangLLZ18,DBLP:conf/acl/CaoLLLLC19} randomly split $30\%$ of the entity pairs for training and development, while using the remaining $70\%$ for testing.
Because our proposed method is unsupervised, all of the entity pairs could be used for testing.

\begin{table*}[t]
\begin{center}
\resizebox{1\textwidth}{!}{
\renewcommand\arraystretch{1.47}
\begin{tabular}{c|ccc|ccc|ccc|ccc|ccc}
  \toprule
  {\multirow{2}{*}{Method}} & \multicolumn{3}{c|}{$\rm{DBP_{ZH-EN}}$} & \multicolumn{3}{c|}{$\rm{DBP_{JA-EN}}$} & \multicolumn{3}{c|}{$\rm{DBP_{FR-EN}}$}& \multicolumn{3}{c|}{$\rm{SRPRS_{FR-EN}}$}& \multicolumn{3}{c}{$\rm{SRPRS_{DE-EN}}$}  \\
   &H@1 & H@10 & MRR & H@1 & H@10 & MRR & H@1 & H@10 & MRR & H@1 & H@10 & MRR & H@1 & H@10 & MRR\\
  \hline
   GCN-Align & 0.434 & 0.762 & 0.550 & 0.427 & 0.762 & 0.540 & 0.411 & 0.772 & 0.530 & 0.243& 0.522&0.340&0.385&0.600& 0.460\\
   MuGNN & 0.494 & 0.844 & 0.611 & 0.501 & 0.857 & 0.621 & 0.495 & 0.870  &0.621 &0.131&0.342& 0.208& 0.245&0.431&0.310\\
   BootEA & 0.629 & 0.847 & 0.703 & 0.622 & 0.853 & 0.701 & 0.653 & 0.874 & 0.731 & 0.365&0.649&0.460&0.503&0.732&0.580\\
   MRAEA &0.757&0.930&0.827&0.758&0.934&0.826&0.781&0.948&0.849&0.460&0.768&0.559&0.594&0.818&0.666\\
   JEANS & 0.719&0.895&0.791 & 0.737&0.914&0.798 & 0.769&0.940&0.827&-&-&-&-&-&-\\
  \hline
   GM-Align & 0.679 & 0.785 & - & 0.739 & 0.872 & - & 0.894 & 0.952 & - & 0.574&0.646&0.602&0.681&0.748&0.710\\
   RDGCN & 0.697 & 0.842 & 0.750 & 0.763 & 0.897 & 0.810 & 0.873 & 0.950 & 0.901  &0.672&0.767& 0.710&0.779&0.886&0.820 \\
   HGCN &0.720&0.857&0.760&0.766&0.897&0.810&0.892&0.961&0.910&0.670&0.770&0.710&0.763&0.863&0.801\\
   DAT &-&-&-&-&-&-&-&-&-&0.758&0.899&0.810&0.876&0.955&0.900\\
   DGMC &0.801&0.875&-&0.848&0.897&-&0.933&0.960&-&-&-&-&-&-&-\\
  \hline
   AttrGNN & 0.796&0.929&0.845&0.783&0.920&0.834&0.919&0.979&0.910&-&-&-&-&-&-\\
   CEA & 0.787 & -& - & 0.863 &- & - & 0.972 & - & - & 0.962&-&-&0.971&-&-\\
   EPEA&0.885&0.953&0.911&0.924&0.969&0.942&0.955&0.986&0.967&-&-&-&-&-&-\\
  \hline
   SEU(word) & 0.816 & 0.923& 0.854 & 0.865 &0.952 & 0.896 & 0.953 & 0.989 & 0.967 & 0.812&0.902&0.843&0.902&0.951&0.920\\
   SEU(char) & 0.870 & 0.947 & 0.897 & 0.947 & 0.984 & 0.961 & 0.986 & 0.998 & 0.990 &0.979&0.994& 0.985&0.980&0.994&0.985 \\
   SEU(w+c)&\textbf{0.900}&\textbf{0.965}&\textbf{0.924}&\textbf{0.956}&\textbf{0.991}&\textbf{0.969}&\textbf{0.988}&\textbf{0.999}&\textbf{0.992}&\textbf{0.982}&\textbf{0.995}&\textbf{0.986}&\textbf{0.983}&\textbf{0.996}&\textbf{0.987}\\
  \bottomrule
\end{tabular}
}
\caption{Main experimental results on DBP$15$K and SRPRS. Baselines are separated in accord with the three groups described in Section \ref{baseline}. Most results are from the original papers.
Some recent papers are failed to run on missing datasets or do not release the source code yet. We will fill in these blanks after contacting their authors.}
\label{table:res1}
\vspace{-2em}
\end{center}
\end{table*}

\subsection{Baselines}
\label{baseline}
We compare our method against the following three groups of advanced EA methods:
(1) \textbf{Structure}:
These methods only use the structure information (i.e., triples):
GCN-Align \cite{DBLP:conf/emnlp/WangLLZ18}, MuGNN \cite{DBLP:conf/acl/CaoLLLLC19}, BootEA \cite{DBLP:conf/ijcai/SunHZQ18}, MRAEA \cite{DBLP:conf/wsdm/MaoWXLW20}, JEANS \cite{DBLP:conf/eacl/ChenSZR21}.
(2) \textbf{Word-level}:
These methods average the pre-trained entity name vectors to construct the initial features:
GM-Align \cite{DBLP:conf/acl/XuWYFSWY19}, RDGCN \cite{DBLP:conf/ijcai/WuLF0Y019}, HGCN \cite{DBLP:conf/emnlp/WuLFWZ19}, DAT \cite{DBLP:conf/sigir/Zeng00TT20}, DGMC \cite{DBLP:conf/iclr/FeyL0MK20}.
(3) \textbf{Char-level}:
These EA methods further adopt the char-level textual features:
AttrGNN \cite{DBLP:conf/emnlp/LiuCPLC20}, CEA \cite{DBLP:conf/icde/Zeng0T020}, EPEA \cite{DBLP:conf/emnlp/WangYY20}.

For our proposed method, SEU(word) and SEU(char) represent the model only using the word and char features as the inputs, respectively.
SEU(w+c) represents concatenating the word and char features together as the inputs.

\subsection{Settings}
\textbf{Metrics}.
Following convention, we use $Hits@k$ and \emph{Mean Reciprocal Rank} (MRR) as our evaluation metrics.
The $Hits@k$ score is calculated by measuring the proportion of correct pairs in the top-$k$.
In particular, $Hits@1$ equals accuracy.

\noindent
\textbf{Hyper-parameter}.
In the main experiments, we use the Sinkhorn operation to solve the assignment problem.
For all dataset, we use a same default setting:
the depth $L=2$;
the iterations $k=10$;
the temperature $\tau = 0.02$.

\subsection{Main Experiments}
Table \ref{table:res1} shows the main experimental results of all EA methods.
Numbers in \textbf{bold} denote the best results among all methods.

\textbf{SEU vs. Baselines}.
According to the results, our method consistently achieves the best performance across all datasets.
Compared with the previous SOTA methods, SEU (w+c) improves the performance on $Hits@1$ and $MRR$ by $1.5\%$ and $1.3\%$ at least.
More importantly, SEU outperforms the supervised competitors as an unsupervised method, which is critical in practical applications.

In addition to the better performances, SEU also has better interpretability and stability:
(1) When solving with the Hungarian algorithm, we can trace the reasons for each decision by the augmenting path, which brings better interpretability.
(2) As we all know, neural networks optimized by SGD usually have some performance fluctuations.
Since both the Hungarian algorithm and Sinkhorn operation are deterministic, multiple runs of these algorithms remain unchanged under the same hyper-parameters, which means better stability.

\textbf{Word vs. Char}.
From Table \ref{table:res1}, we observe that the char-level SEU greatly outperforms the word-level SEU.
Especially in $\rm SRPRS_{FR-EN}$, the performance gap on $Hits@1$ is more than $16\%$.
As mentioned in Section \ref{TF}, the main reason is that these datasets contain extensive OOV proper nouns.
For example, in $\rm DBP15K$, $4$-$6\%$ of the words are OOV;
while in $\rm SRPRS_{DE-EN}$ and $ \rm SRPRS_{FR-EN}$, more than $12\%$ and $16\%$ of the entity names are OOV, respectively.

Note that the performance difference between SEU(word) and SEU (char) is vast, but these two features still complement to each other, so the combination of them still improves the performances (especially on $\rm DBP_{ZH-EN}$ dataset).
We believe the hidden reason is synonyms.
For example, \emph{soccer} and \emph{football} refer to the same Chinese phrase, but there is almost no overlap in the char-level between these two English words.
However, the word-level features could bridge such semantic gap via pre-trained cross lingual word vectors.

\begin{figure}[t!]
  \centering
  \includegraphics[width=0.85\linewidth]{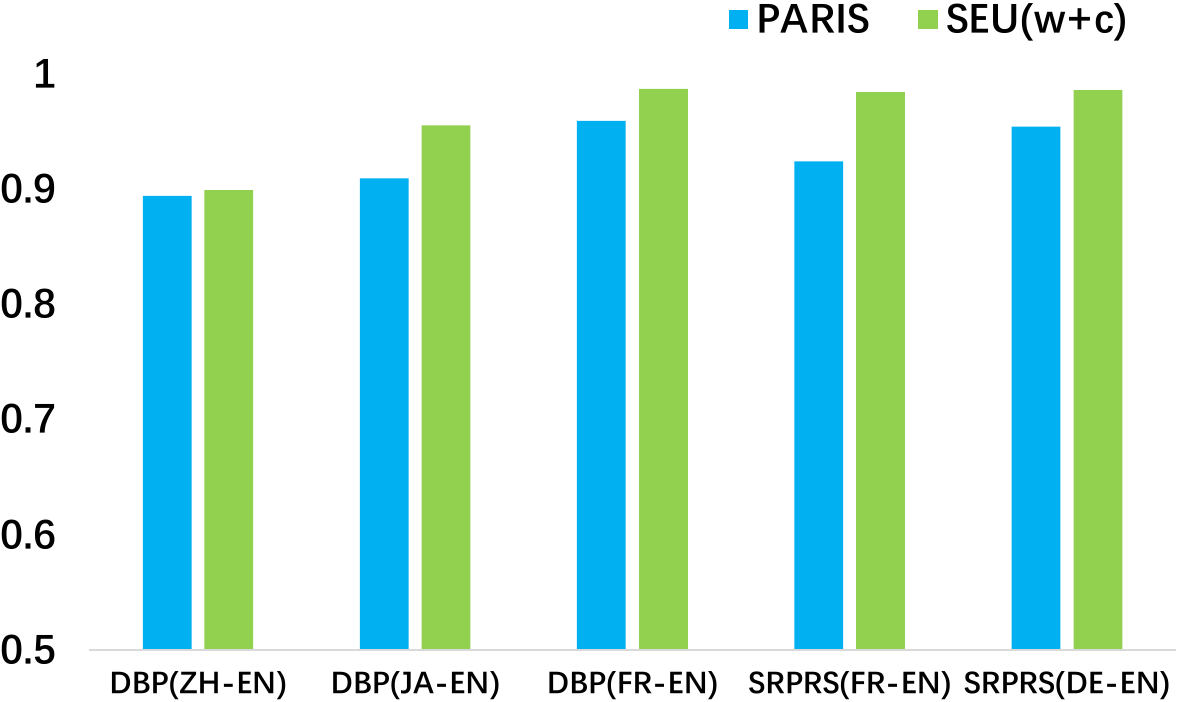}
  \caption{F1-score of SEU(w+c) and PARIS.}\label{fig:pairs}
\end{figure}

\begin{table}[t!]
\begin{center}
\resizebox{0.95\linewidth}{!}{
\renewcommand\arraystretch{0.9}
\begin{tabular}{cccccc}
  \toprule
  algorithm&$\rm DBP_{ZH-EN}$&$\rm DBP_{JA-EN}$&$\rm DBP_{FR-EN}$\\
  \toprule
  Hungarian  &0.907&0.963&0.993\\
  Sinkhorn  &0.900&0.956&0.988\\
  \bottomrule
\end{tabular}
}
\end{center}
\caption{$Hits@1$ of Hungarian and Sinkhorn. \protect\footnotemark }\label{hvsp}
\end{table}
\footnotetext{Since the Hungarian algorithm only outputs the assigned entity pairs, instead of a probability matrix $\bm P$, we can only report the $Hits@1$ performance.}

\begin{table}[t!]
\begin{center}
\resizebox{0.95\linewidth}{!}{
\renewcommand\arraystretch{0.9}
\begin{tabular}{cccccc}
  \toprule
  algorithm&$\rm DBP_{ZH-EN}$&$\rm DBP_{JA-EN}$&$\rm DBP_{FR-EN}$\\
  \toprule
  Hungarian  &43.4s&19.8s&7.6s\\
  Sinkhorn(CPU)  &6.1s&6.1s&6.2s\\
  Sinkhorn(GPU)  &1.8s&1.7s&1.8s\\
  \bottomrule
\end{tabular}
}
\end{center}
\caption{Time costs of Hungarian and Sinkhorn.}\label{hvst}
\end{table}

\textbf{SEU vs. PARIS}.
As mentioned in Section \ref{sec:intro}, a recent summary \cite{DBLP:conf/coling/ZhangLCCLXZ20} notes that several "advanced" EA methods are even beaten by the conventional methods.
To make this study more comprehensive, we also compare SEU against a representative conventional method PARIS \cite{DBLP:journals/pvldb/SuchanekAS11} in Figure \ref{fig:pairs}, which is a holistic unsupervised solution to align KGs based on probability estimates.
Since PARIS may not always output a target entity for every source entity, we use the F1-score as the evaluation metric to deal with entities that do not have a match.
In our method, the F1-score is equivalent to $Hits@1$.
Consistent with Zhang's summary, PARIS is better than most GNN-based EA methods.
On the other hand, SEU outperforms PARIS significantly on these public datasets except for $\rm DBP_{ZH-EN}$.

\textbf{Hungarian vs. Sinkhorn}
Table \ref{hvsp} reports the performances of SEU(w+c) with the Hungarian algorithm and Sinkhorn operation, respectively.
Theoretically, the Hungarian algorithm could generate the optimal solution precisely, while the Sinkhorn operation can only generate an approximate solution.
Therefore, the Hungarian algorithm is always slightly better, but the performance gap is relatively small.
Furthermore, we list the time costs of these two algorithms in Table \ref{hvst}.
We observe that the time costs of the Hungarian algorithm are unstable, which depend on the dataset.
Meanwhile, the time costs of the Sinkhorn operation are much more stable.
Because the Sinkhorn operation is completely parallelizable, its time costs could be further reduced by the GPU.
In general, the Sinkhorn operation is more suitable for large-scale EA because of its higher efficiency.

\begin{table}[t]
\begin{center}
\resizebox{1\linewidth}{!}{
\renewcommand\arraystretch{0.8}
\begin{tabular}{c|cccc}
  \toprule
   Method&\textbf{DBP15K}&\textbf{SRPRS}\\
  \toprule
  GCN-Align \cite{DBLP:conf/emnlp/WangLLZ18}&103&87\\
  MuGNN \cite{DBLP:conf/acl/CaoLLLLC19}&3,156&2,215\\
  BootEA \cite{DBLP:conf/ijcai/SunHZQ18}&4,661&2,659\\
  MRAEA \cite{DBLP:conf/wsdm/MaoWXLW20}&3,894&1,248\\
  GM-Align \cite{DBLP:conf/acl/XuWYFSWY19}&26,328&13,032\\
  RDGCN \cite{DBLP:conf/ijcai/WuLF0Y019}&6,711&886\\
  HGCN \cite{DBLP:conf/emnlp/WuLFWZ19}&11,275&2,504\\
  \textbf{SEU(CPU)} &\textbf{22.1}&\textbf{13.8}\\
  \textbf{SEU(GPU)} &\textbf{16.2}&\textbf{9.6}\\
  \bottomrule
\end{tabular}
}
\end{center}
\caption{Time costs of EA methods (seconds).\protect\footnotemark}\label{tabel:time}
\vspace{-1em}
\end{table}

\textbf{Overall Time Efficiency}
We specifically evaluate the overall time costs of some EA methods and report the results in Table \ref{tabel:time}.
It is obvious that the efficiency of SEU far exceeds all advanced competitors.
Typically, existing GNN-based methods require forward propagations on every batch, and the convergence of models usually requires hundreds of batches.
Since SEU does not have any trainable parameters, it only requires forward propagation once, enabling SEU to achieve such acceleration.

\subsection{Auxiliary Experiments}
To explore the behavior of SEU in different situations, we design the following experiments:

\begin{figure}[t!]
  \centering
  \includegraphics[width=0.9\linewidth]{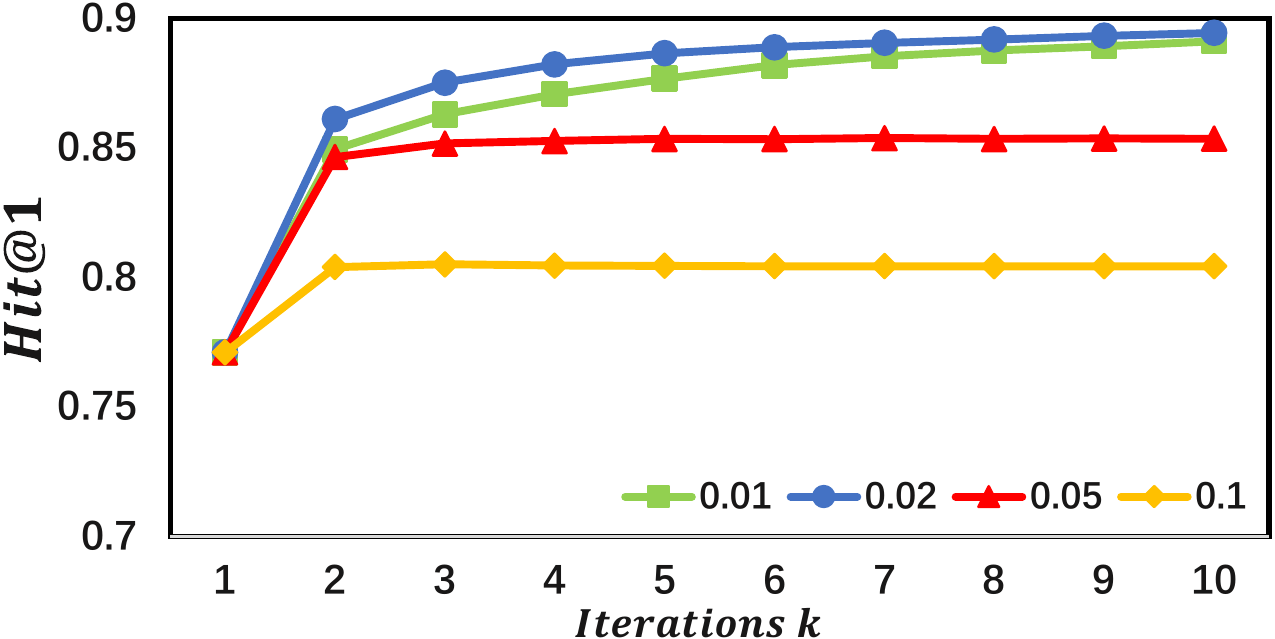}
  \caption{$Hits@1$ on $\rm DBP_{ZH-EN}$ with different $\tau$.}\label{fig:hvsp}
\end{figure}

\textbf{Temperature $\bm \tau$}.
Similar to the temperature $\tau$ in the softmax operation, $\tau$ in the Sinkhorn operation is also used to make the distribution closer to one-hot.
With the remaining config unchanged, we set $\tau$ with different values and report the corresponding performances of SEU(w+c) on $\rm DBP_{ZH-EN}$ in Figure \ref{fig:hvsp}.
If we choose an appropriate $\tau$, the Sinkhorn algorithm will converge quickly to the optimal solution.
But if $\tau$ is set too large, the algorithm will fail to converge.

\begin{figure}[t!]
  \centering
  \includegraphics[width=0.9\linewidth]{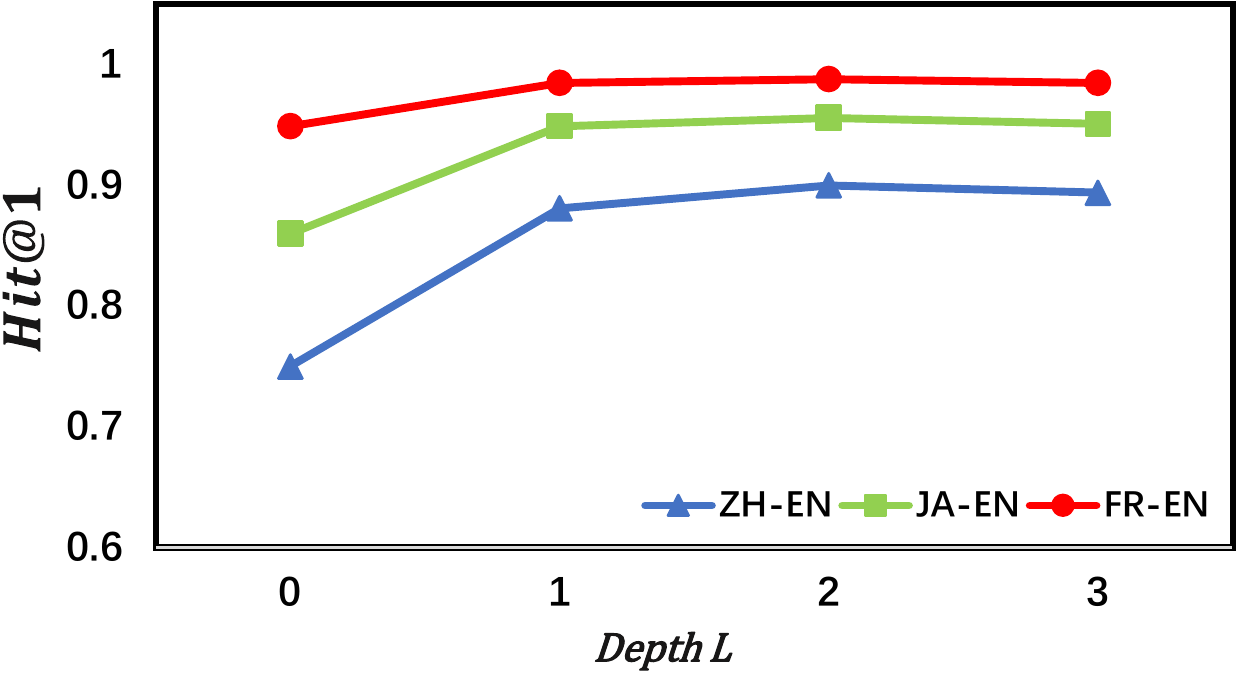}
  \caption{$Hits@1$ with different depths $L$.}\label{fig:depth}
\end{figure}

\textbf{Depth $\bm L$}.
For depth $L$, we list the experimental results in Figure \ref{fig:depth}.
In particular, $L = 0$ is equivalent to aligning entities only according to their own features without the neighborhood information.
SEU(w+c) with $L=2$ achieves the best performance on all subsets of $\rm DBP15K$, which indicates the necessity of introducing neighborhood information.
Similar to GNN-based EA methods, SEU is also affected by the over-smoothing problem.
When stacking more layers, the performances begin to decrease slightly.

\begin{table}[t]
\centering
\resizebox{1.0\linewidth}{!}{
\renewcommand\arraystretch{1.3}
\begin{tabular}{l|cccccc}
\toprule
\multirow{2}{*}{Method} & \multicolumn{2}{c}{$\rm{DBP_{ZH-EN}}$} & \multicolumn{2}{c}{$\rm{DBP_{JA-EN}}$} & \multicolumn{2}{c}{$\rm{DBP_{FR-EN}}$}\\
& Hits@1 & MRR & Hits@1 & MRR & Hits@1 & MRR\\
\toprule
 $\bm A$ &0.890&0.915&0.952&0.965&0.985&0.989\\
 $\bm A_r$ &0.891&0.916&0.953&0.966&0.985&0.988\\
 $\bm A_L$ &0.887&0.912&0.953&0.965&0.984&0.987\\
 $\bm A_{rel}$&\textbf{0.900}&\textbf{0.924}&\textbf{0.956}&\textbf{0.969}&\textbf{0.988}&\textbf{0.992}\\
\bottomrule
\end{tabular}
}
\caption{Performances with different types of adjacency matrices $\bm A$.}
\label{table:adj}
\end{table}

\textbf{Adjacency matrix $\bm A$}.
To distinguish different relation types in KGs, we adopt a simple strategy to generate the relational adjacency matrix $\bm A_{rel}$.
Table \ref{table:adj} reports the performances of SEU(w+c) with different types of adjacency matrices.
$\bm A$ is the standard adjacency matrix, $\bm A_r = \bm D^{-1}\bm A$ is the equal probability random walk matrix and $\bm A_L =\bm I - \bm D^{-1/2}\bm A\bm D^{-1/2}$ is the symmetric normalized Laplacian matrix.
The experimental results show that $\bm A_{rel}$ achieves the best performance across all these three subsets.

\subsection{Discussion}
\label{sec:dis}
From the experimental results, we observe that the supervised EA methods are even beaten by the unsupervised methods.
In this section, we propose a hypothesis that the reason behind this counter-intuitive phenomenon is potential over-fitting.

As mentioned in Section \ref{baseline}, existing EA methods could be divided into structure-based and text-based according to the input features.
The only difference between them is that the structure-based methods use randomly initialized vectors as the entity features, while the text-based methods use pre-mapped textual features as the inputs.
Let us consider the vanilla GCN as a sample:
\begin{equation}
    \bm H^{l+1} = \sigma(\bm A_L\bm H^l\bm W^l)
\end{equation}
where $\sigma$ represents the activation function.
For the structure-based methods, since the input features $\bm H$ and the transformation matrix $\bm W$ are both randomly initialized, they could be simplified into one matrix, i.e., $\bm H^{l+1} = \sigma(\bm A_L \bm H^{l})$.
This idea has been proved by many structure-based EA methods \cite{DBLP:conf/acl/CaoLLLLC19,DBLP:conf/wsdm/MaoWXLW20}, which propose to diagonalize or remove the transformation matrix $\bm W$.
In this situation, GCN is reduced to a simple fully connected neural network with adjacency matrices as its input features.
The essence of structure-based EA methods is to map the features of adjacency matrices into a unified vector space.
Therefore, these structure-based EA methods require supervised data to learn the parameters.

As for the text-based EA methods, the textual features of entities have already been pre-mapped into a unified semantic space by machine translation or cross-lingual word vectors.
Therefore, these text-based EA methods are equivalent to further fitting these pre-mapped features on a few aligned entity pair seeds, which could cause potential over-fitting.
Considering that we could directly align entities as an assignment problem, it is unnecessary to further fit entity features via neural networks.

As a simple unsupervised method, our proposed SEU achieves excellent performances on several EA datasets, which confirms the above analysis from the empirical side.
It is noted that this section only proposes a possible explanation, not rigorous proof.
We will continue to explore in this direction.

\section{Conclusion}
In this paper, we successfully transform the cross-lingual EA problem into the assignment problem.
Based on this finding, we propose a frustratingly Simple but Effective Unsupervised EA method (SEU) without neural networks.
Experiments on widely used public datasets indicate that SEU outperforms all advanced competitors and has high efficiency, interpretability, and stability.
\bibliography{anthology}
\balance
\bibliographystyle{acl_natbib}

\end{document}